\newtheorem{theorem}{Theorem}
\newtheorem{lemma}[theorem]{Lemma}
\journal{Discrete Optimization}
\begin{document}
	
\setlength{\abovedisplayskip}{3pt}
\setlength{\belowdisplayskip}{3pt}

\begin{frontmatter}

%% Title, authors and addresses

%% use the tnoteref command within \title for footnotes;
%% use the tnotetext command for theassociated footnote;
%% use the fnref command within \author or \address for footnotes;
%% use the fntext command for theassociated footnote;
%% use the corref command within \author for corresponding author footnotes;
%% use the cortext command for theassociated footnote;
%% use the ead command for the email address,
%% and the form \ead[url] for the home page:
%% \title{Title\tnoteref{label1}}
%% \tnotetext[label1]{}
%% \author{Name\corref{cor1}\fnref{label2}}
%% \ead{email address}
%% \ead[url]{home page}
%% \fntext[label2]{}
%% \cortext[cor1]{}
%% \address{Address\fnref{label3}}
%% \fntext[label3]{}

\title{Constraint Programming to Discover One-Flip Local Optima of Quadratic Unconstrained Binary Optimization Problems}

%% use optional labels to link authors explicitly to addresses:
\author{Amit Verma\corref{cor1}}
\ead{averma@missouriwestern.edu}
\cortext[cor1]{Corresponding author}
%\fntext[fn1]{Student}

\author{Mark Lewis}
%\ead{mlewis14@missouriwestern.edu}
%\fntext[fn2]{Lecturer}

\address{Craig School of Business, Missouri Western State University, Saint Joseph, MO, 64507, United States}

\begin{abstract}
	The broad applicability of Quadratic Unconstrained Binary Optimization (QUBO) constitutes a general-purpose modeling framework for combinatorial optimization problems and are a required format for gate array and quantum annealing computers. QUBO annealers as well as other solution approaches benefit from starting with a diverse set of solutions with local optimality an additional benefit. This paper presents a new method for generating a set of one-flip local optima leveraging constraint programming. Further, as demonstrated in experimental testing, analysis of the solution set allows the generation of soft constraints to help guide the optimization process.  
	
	%Quadratic Unconstrained Binary Optimization (QUBO) is a general-purpose modeling framework for combinatorial optimization problems. Recently, the QUBO modeling format has emerged as a requirement for quantum annealers. This paper presents an efficient method based on constraint programming to find local optima for QUBO. Moreover, we devise a hybrid approach that utilizes the information extracted from a set of high-quality locally optimal solutions in the search process. Computational results on benchmark datasets illustrate the efficacy of our method.
%% Text of abstract

%Quadratic Unconstrained Binary Optimization (QUBO) modeling has recently emerged as a unifying framework for solving various optimization problems due to the availability of various generic solvers like tabu search and quantum annealer. Linear equality constraints can be easily incorporated into the objective via penalty functions. However, the penalty coefficient magnitude M increases the complexity of the QUBO model significantly. In this paper, we propose a novel approach to estimate the lower bound of M. We combine this approach with a greedy local search 1-flip solver to illustrate the efficacy of our approach.

\end{abstract}

\begin{keyword}
Quadratic Unconstrained Binary Optimization, pseudo-Boolean optimization, local optima, preprocessing, quantum computer%, local optima network
\end{keyword}

\end{frontmatter}

%% \linenumbers

\section{Introduction}
%\cite{billionnet2004exact}
%STRICT INEQUALITY - Discuss
The Quadratic Unconstrained Binary Optimization (QUBO) modeling format, $max \; x'Qx; \; x \in \{0,1\}$, has grown in popularity in the last decade and it has been shown that all of Karp's NP-complete problems as well as many constrained problems can be transformed to QUBO (see \cite{kochenberger2014unconstrained} for more details). More recently, QUBO instantiations are a requirement for quantum annealers (\cite{hauke2020perspectives}) which has led to significant interest from the research community. Many QUBO heuristics rely on a starting set of elite solutions (\cite{wang2013backbone,wang2012path,glover2010diversification,samorani2019clustering}) and these starting solutions are key to their performance. 

The set of starting solutions are either generated randomly, or more commonly, through an improvement heuristics such as path relinking, restarts and scatter search (\cite{samorani2019clustering,boros2007local,wang2012path}). This process is limited by the heuristics' ability to find local optima and insure diversity in the elite set. In this paper, we address this shortcoming via a constraint programming (CP) approach for the generation of local optima. Additionally, we present a learning-based method that utilizes the set of local optima to enhance the performance of an existing QUBO solver.

The one-flip local optima $\hat{x}$ for a QUBO has the following characteristics for a maximization problem $max \; x'Qx$:
\begin{align}
& \hat{x}' Q \hat{x} \geq y' Q y \; \forall y \in S_1 (\hat{x}) ; \; \hat{x},y \in \{0,1\} \label{cons}
\end{align}
where $S_1 (\hat{x})$ represents the set of all one-flip neighbors. Hence the solution vectors $y$ and $\hat{x}$ differ by exactly one bit. The total number of such one-flip neighbors is $N$ where $N$ represents the number of variables and $Q$ is an $N$x$N$ matrix of integer or real coefficients. The relationship between $\hat{x}$ and i'th one-flip neighbor $y_i$ is given by $y_i = 1- \hat{x_i}$ and $y_k = \hat{x_k} \; \forall k \in [1,N] : \; k \neq i$. Thus, Equation \ref{cons} leads to $N$ inequalities. $x'Qx$ can be rewritten as $\sum_{i=1}^N (q_{i} x_i + \sum_j q_{ij} x_i x_j)$. We could isolate the impact of flipping the bit corresponding to variable $x_i$ and transform Equation \ref{cons} as:
\begin{align}
& q_i \hat{x_i} + \sum_j q_{ij} \hat{x_i} \hat{x_j} \geq q_i (1-\hat{x_i}) + \sum_j q_{ij} (1-\hat{x_i}) \hat{x_j} \; \forall i \in [1,N]
\end{align}
Note that terms not involving variable $\hat{x_i}$ are eliminated on both sides. Rearranging the terms, we get:
\begin{align}
& 2 q_i \hat{x_i} + 2 \sum_j q_{ij} \hat{x_i} \hat{x_j} \geq q_i + \sum_j q_{ij} \hat{x_j} \; \forall i \in [1,N] \label{final1}
\end{align}

Upon further simplification, the set of equations are reduced to $2 \hat{x_i} \; expr \geq expr$ where $expr =  q_i + \sum_j q_{ij} \hat{x_j}$ and further reduces to $\hat{x_i} > \frac{1}{2}$ if $expr$ is positive and $\hat{x_i} < \frac{1}{2}$ if $expr$ is negative. The following lemma help us in identifying the local optima based on the values of $expr$:
\begin{lemma}
	If $expr < 0$ then $\hat{x_i} = 0$ and if $expr > 0$ then $\hat{x_i} = 1$, else $\hat{x_i}$ can be either $0$ or $1$.
\end{lemma}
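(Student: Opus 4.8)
The plan is to obtain the lemma directly from Equation \ref{final1} by isolating the coefficient of $\hat{x_i}$ and performing a case analysis on the sign of $expr = q_i + \sum_j q_{ij} \hat{x_j}$. First I would rewrite the $i$-th inequality in \ref{final1} in the equivalent factored form $(2\hat{x_i} - 1)\, expr \geq 0$. This is immediate: moving every term of \ref{final1} to the left-hand side gives $q_i(2\hat{x_i}-1) + \sum_j q_{ij}\hat{x_j}(2\hat{x_i}-1) \geq 0$, and pulling out the common factor $(2\hat{x_i}-1)$ yields the claimed form. No monomials are lost in this step because, as already noted after Equation \ref{cons}, every term not containing $\hat{x_i}$ cancels between the two sides of the one-flip inequality.

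Next I would split into three cases according to the sign of $expr$, which is a fixed real number once the remaining components of $\hat{x}$ are held fixed. If $expr > 0$, dividing $(2\hat{x_i}-1)\, expr \geq 0$ by $expr$ preserves the direction of the inequality and gives $2\hat{x_i} - 1 \geq 0$, i.e. $\hat{x_i} \geq \tfrac12$; since $\hat{x_i} \in \{0,1\}$ this forces $\hat{x_i} = 1$. If $expr < 0$, dividing reverses the inequality, yielding $\hat{x_i} \leq \tfrac12$, hence $\hat{x_i} = 0$. If $expr = 0$, the inequality degenerates to $0 \geq 0$, which holds for either value of $\hat{x_i}$, so both assignments are feasible. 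Assembling the three cases reproduces exactly the statement of the lemma.

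The only point that needs care — rather than a genuine obstacle — is to stress that $expr$ is itself a function of $\hat{x}$, so the lemma should be read as a coordinate-wise self-consistency condition that every one-flip local optimum must satisfy, not as a closed-form formula for $\hat{x}$; I would make this explicit, since it is precisely this implicit character that the subsequent constraint-programming model exploits. It is also worth flagging that the integrality of $\hat{x}$ is what does the real work in the strict cases: it is the only reason $\hat{x_i} \geq \tfrac12$ can be strengthened to $\hat{x_i} = 1$ and $\hat{x_i} \leq \tfrac12$ to $\hat{x_i} = 0$.
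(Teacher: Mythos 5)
Your proof is correct and follows essentially the same route as the paper: both reduce the $i$-th one-flip inequality to $2\hat{x_i}\,expr \geq expr$ (equivalently $(2\hat{x_i}-1)\,expr \geq 0$) and then do a case analysis on the sign of $expr$, using the integrality of $\hat{x_i}$ to turn $\hat{x_i}\geq\tfrac12$ into $\hat{x_i}=1$ and $\hat{x_i}\leq\tfrac12$ into $\hat{x_i}=0$. The only cosmetic difference is that the paper argues the two strict-sign cases by contradiction (substituting $expr=\pm k$, $k>0$) rather than by dividing through by $expr$; your explicit remark that the condition is an implicit self-consistency requirement rather than a closed-form assignment is a worthwhile clarification but not a departure in method.
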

The lemma could be enforced by the following set of linear constraints:
\begin{align}
	& q_i + \sum_j q_{ij} \hat{x_j} \leq M \hat{x_i} \; \forall i \in [1,N]\\
	& q_i + \sum_j q_{ij} \hat{x_j} \geq -M(1-\hat{x_i}) \; \forall i \in [1,N]
\end{align}
where $M$ is a large positive number.

A problem instantiated by this model is solved by a CP solver yielding multiple solutions for one-flip local optima. While a similar set of expressions could also be derived for a two-flip local optima (and in general a r-flip local optima), the number of constraints and the associated computational complexity increases significantly. %Also, the strict local maxima requires a minor tweak to the Equation \ref{cons} as $\hat{x}' Q \hat{x} \geq y' Q y + 1$. Herein we assume integral coefficients of the $Q$ matrix and an alternate way of handling $\hat{x}' Q \hat{x} > y' Q y$ which is a required condition for strict optimality. We anticipate to address such topics for future research.

Consider the following $Q$ matrix involving three variables where the coefficients have been doubled and moved to its upper triangular portion:
\[
\begin{bmatrix}
	-4 & 12 & -12\\
	0 & -8 & -8\\
	0 & 0 & 9
\end{bmatrix}
\]
We are interested in obtaining the set of one-flip local optima $\hat{x}$ that satisfies the following constraints based on (\ref{final1}):
\begin{align*}
	-8 \hat{x_1} -12 \hat{x_2} + 12 \hat{x_3} + 24 \hat{x_1} \hat{x_2} -24 \hat{x_1} \hat{x_3} & \geq -4 \\
	-12 \hat{x_1} -16 \hat{x_2} + 8 \hat{x_3} + 24 \hat{x_1} \hat{x_2} - 16 \hat{x_2} \hat{x_3} & \geq -8 \\
	12 \hat{x_1} + 8 \hat{x_2} + 18 \hat{x_3} - 24 \hat{x_1} \hat{x_3} -16 \hat{x_2} \hat{x_3} & \geq 9
\end{align*}
Solving yields a single one-flip local optima $\hat{x}$ given by $[0,0,1]$. Verifying one-flip optimality, the objective function value of $9$ associated with  $[0,0,1]$ is greater than those of the one-flip neighbors $[1,0,1],[0,1,1]$ and $[0,0,0]$ with corresponding objective function evaluations of $-7,-7$ and $0$ respectively.

It is worth noting that all \textit{global} optima are also locally optimal with respect to all possible r-flips with the impact or r-flips being extensively studied. The authors in \cite{alidaee2010theorems} present theoretical formulas based on partial derivatives for quickly determining effects of r-flips on the objective function. \cite{anacleto2020closed} proposed two formulas for quickly evaluating r-flip moves. However, the number of possible r-flip moves to evaluate grows exponentially and one-flip moves are the most commonly implemented approach. %They also present experimental results based on local search and Variable Neighborhood Search. 

%solution or reference solution
Elite sets of high-quality solutions are often used in the design of algorithms for fixing variables. For example, \cite{chardaire1995thermostatistical} fix the variables as the temperature associated with simulated annealing decreases. Their learning process also relies on thresholds and requires some parameter tuning. \cite{wang2011effective} investigated two variable fixing strategies inside their tabu search (TS) routine for QUBO and \cite{zhou2017data} uses a data mining routine to learn frequent patterns from a set of high-quality solutions.% generated by the first phase of GRASP for the Quadratic Assignment Problem. These patterns help in building starting solutions for the optimization heuristics. %The hybrid algorithm performed well with respect to computational efficiency and solution quality. 

%High-quality reference solutions were used to fix or free variables. The authors acknowledge that their fixing routine could lead to errors. However, one of the fixing strategies resulted in the best known solutions for the test problems.

% As opposed to the hard constraint method which may over-zealously eliminate globally optimal solutions, we present a soft constraint methodology using a learning-based approach utilizing the set of one-flip locally optimal solutions.
Fixing/freeing variables have also been explored in the context of a quantum annealer by \cite{karimi2017boosting}. The authors reduce the QUBO by fixing some variables to values that have a high probability of occurrence in the sample set of solutions. In contrast to fixing, some of the approaches learn to avoid local optima in the search process. \cite{basharu2007escaping} compare two strategies of escaping local optima: (a) assigning penalties to violated constraints (b) assign penalties to individual variable values participating in a constraint violation. Their results quantify the impact of penalties on the solution landscape.% of the transformed problem caused by penalties. 

%or the features of select solution vectors 
%The latter phase either fixes or frees selected variables.
The reference/elite set and other problem features are also useful in designing various metaheuristics. For example, \cite{wang2013backbone} apply backbone guided TS to QUBO alternating between a TS phase and a phase that fixes/frees strongly determined variables. \cite{voudouris2003guided} and \cite{whittley2004attribute} utilize problem features to guide the local search routine where the objective function is augmented with penalty terms based on problem features. 

 %Note that they also adopt a hard constraint approach, which might lead to sub-optimal solutions. %More recently, the same set of authors showed in \cite{karimi2017effective} that their technique is effective for a range of solvers and difficult optimization problems. 

%Our reformulation based on the transformed $Q$ matrix could be used to improve the performance of existing QUBO solvers (like \cite{verma2020penalty} and \cite{verma2020optimal}).

%DISCUSS THE IMPORTANCE OF FINDING HIGH QUALITY SAMPLES.

% WHERE TO INCLUDE THE FOLLOWING
%Note that we present a computationally efficient method to obtain a set of local optima and address the challenges raised in \cite{fieldsend2018computationally}.
The contribution of our paper is twofold. First, we present a new constraint programming approach to obtain a set of one-flip local optima for QUBO. These high-quality samples can be used as a starting elite solutions set and can also be utilized for the construction of Local Optima Networks (see \cite{ochoa2014local} for more details) for a wide variety of combinatorial problems that fit into the QUBO framework. Second, we provide an approach to utilize the information contained in the set of local optima through penalties and rewards by transforming the $Q$ matrix using two variants that favor or avoid the set $L$ of locally optimal solutions. Our reformulations could be used to improve the performance of existing QUBO solvers (like \cite{verma2020penalty} and \cite{verma2020optimal}). %Note that we generate the set $L$ a priori. However, more sophisticated techniques could be developed wherein the one-flip locally optimal elite set is dynamically updated  and integrated into an ongoing search. We reserve such topics for future research.%and two-flip

%STRICT LOCAL OPTIMA MIGHT LEAD TO FEWER DEGENERATE SOLUTIONS

%TODO - Discuss Strict Local Optima, Literature Review (\cite{alidaee2010theorems}, \cite{anacleto2020closed}, \cite{verma2020penalty}, \cite{glover2018logical})
%DISCUSS COMPACT FORMULATION FOR $lb_e$ and $ub_e$
%COMMENT ON COMPLEXITY
%CAN WE RELATED one-flip LOCAL OPTIMA WITH two-flip LOCAL OPTIMA
\section{Learning Approach}

There are various ways to utilize the information provided by the set of local optima $L$. Note that $L$ could be obtained by satisfying the constraints corresponding to one-flip local optima detailed in Section 1. Herein we present a simple approach that relies on the number of times a specific variable $x_i$ is set to $0$ or $1$. If the variable $x_i$ takes a specific value more frequently in the set of local optima, there are two schools of thought in the literature to handle it. First, favor local optima and hypothesize that there is a high chance that the global optima would also have such a variable $x_i$ set to $0$ or $1$ respectively. Second, design heuristics to avoid the set of local optima and aid the solver to explore new areas in the solution landscape while avoiding local optima. Our approach to calculate the frequency of occurrence for each variable $x_i$ in the set of local optima is outlined in Algorithm 1.%In this way, progress is made towards global optima based on the properties of the set of local optima $L$ implemented as soft constraints incorporated into $Q$. 

\begin{algorithm}
	\scriptsize%\footnotesize
	\caption{Frequency calculation based on the set of local optima}
	\label{learning}
	\begin{algorithmic}[1] % The number tells where the line numbering should start
		\Procedure{Frequency}{$L$} \Comment{Returns the relative frequency of setting $x_i = 0/1$ in $L$}
		\State $freq_0 \gets 0$
		\State $freq_1 \gets 0$
		\For{$i = [1,N]$} \Comment{For all variables}
		\For{$k = [1,|L|]$} \Comment{For all locally optimal solutions}
		\State $If \; x[i] == 0, freq_0[i] \gets freq_0[i] + 1$
		\State $If \; x[i] == 1, freq_1[i] \gets freq_1[i] + 1$
		\EndFor
		\EndFor
		\State $freq_0 \gets freq_0/|L|$
		\State $freq_1 \gets freq_1/|L|$
		\State \textbf{return} $freq_0$ and $freq_1$\Comment{The chance of setting a variable $x_i$ to $0/1$}
		\EndProcedure
	\end{algorithmic}
\end{algorithm}

%Note that we extract the number of times a variable $x_i=0/1$ in the set of local optima $L$ and store it in an array entry $freq_0[i]$ and $freq_1[i]$ respectively.

At the end of this process, we return the relative frequency by dividing each $freq$ entry with the number of locally optimal solutions $|L|$. We use this information contained in $freq$ in multiple ways. Noting that $freq$ is the chance of setting a specific variable $x_i$ to $0/1$ in the set $L$, then if the value of $freq_1[i]$ is close to $1$, the variable $x_i$ is set to $1$ in majority of the locally optimal solutions. Thus, we consider setting a variable $x_i = 1$ if $freq_1[i] >= \alpha$ where $\alpha$ is a user-defined parameter. On the other hand, the solver should escape the locally optimal solutions by disincentivizing $x_i = 1$ if $freq_1[i] >= \alpha$ so that the solver avoids replicating the behavior observed in the set of local optima. We explore both variants of the transformation approach designed to (i) favor local optima (ii) escape local optima. For this purpose, we will adjust the linear coefficients of the original $Q$ matrix to generate $Q_1$ and $Q_2$ for the two strategies with the typical values of $\alpha$ ranging from $95-100\%$. % Note that the threshold parameter $\alpha$ could vary based on the observed number of local optima.

%Although we could consider the information extracted from the set of local optima as a hard constraint, that could create a new solution space that excludes global optima. Since we are interested in aiding the solver to yield improved solutions, we employ soft constraints.

%The superscripts $l$ and $q$ identify strategies that make updates to only the linear terms and only the quadratic terms respectively. 

The technique for generating the two transformed matrices $Q_1$ and $Q_2$ based on strategies of favoring and escaping local optima are implemented as soft constraints and summarized in Algorithm 2. Specifically, if the chance of a variable $x_i$ to be set to $1$ (given by $freq_1[i]$) is greater than or equal to $\alpha$, we add a reward $\delta$ to the linear coefficient $q_i$ for favoring local optima. Thus, for strategy (i), we use the transformed $Q_1$ matrix involving $q_i^{1} \leftarrow q_i^1 + \delta$. This change incentivizes any solver to set $x_i=1$. Similarly, we make updates to every linear coefficient in the transformed matrix whenever $freq_1[i] \geq \alpha$. A large value of $\delta$ enforces the constraint $x_i = 1$ strictly. However, it could also alter the solution landscape for the solver. Conversely, a penalty term $-\delta$ added to the linear coefficient $q_i$ is utilized as a proxy for the constraint $x_i = 0$ in a maximization problem (if $freq_{0}[i] \geq \alpha$). The changes are reversed for the second strategy of avoiding local optima. %In the future we will investigate augmenting quadratic terms. %We could also augment each quadratic coefficient involving the variable pair $x_i$ and $x_j$ depending on the variable assignments, but we reserve such topics for future research.

\begin{algorithm}
	\scriptsize%\footnotesize
	\caption{Transformation Approach}
	\label{transform}
	\begin{algorithmic}[1] % The number tells where the line numbering should start
		\Procedure{Transformation}{$Q,freq,\alpha,\delta$} \Comment{Returns the transformed matrices $Q_1$ and $Q_2$}
		\State $Q_1 \gets Q$
		\State $Q_2 \gets Q$
		\For{$i = [1,N]$} \Comment{For all variables}
		%\For{$k = [1,|L|]$} \Comment{For all local optimal solutions}
		\State $If \; freq_0[i] \geq \alpha, Q_1[i,i] \gets Q_1[i,i] - \delta$
		\State $If \; freq_1[i] \geq \alpha, Q_1[i,i] \gets Q_1[i,i] + \delta$
		\State $If \; freq_0[i] \geq \alpha, Q_2[i,i] \gets Q_2[i,i] + \delta$
		\State $If \; freq_1[i] \geq \alpha, Q_2[i,i] \gets Q_2[i,i] - \delta$
		%\EndFor
		\EndFor
		%\State $freq_0 \gets freq_0/|L|$
		%\State $freq_1 \gets freq_1/|L|$
		\State \textbf{return} $Q_1$ and $Q_2$ \Comment{The transformed matrices based on strategies (i) and (ii)}
		\EndProcedure
	\end{algorithmic}
\end{algorithm}

%Moreover, if the cumulative frequency based on the observed local optimal solutions is greater than $beta$, then the resulting assignment is $q_i \leftarrow q_i + \delta$.  
%We perform a sensitivity analysis on the $\alpha$ parameter in Section \ref{expt}. 

%Again, the choice of $\delta$ would influence the performance of the solver. We would study the impact of the parameters $\alpha$ and $\delta$ on the computational experiments in Section \ref{expt}.

\section{Computational Experiments} \label{expt}
%designated by $nodes_edges_instance$ and $bqp_nodes_instance$ respectively
For testing we use the QUBO instances presented in \cite{glover2018logical} and \cite{beasley1990or}. The algorithms were implemented in Python 3.6. The experiments were performed on a 3.40 GHz Intel Core i7 processor with 16 GB RAM running 64 bit Windows 7 OS. The datasets described in \cite{glover2018logical} have $1000$ nodes while the ORLIB instances \cite{beasley1990or} have $1000$ and $2500$ nodes. Our experiments utilize a path relinking and tabu search based QUBO solver.  %The datasets have $2500$ variables. %We will evaluate the performance of both quadratic and linear models detailed in Section 1. We also utilize the linear CP SAT solver provided by Google. 

A one-flip tabu search with path relinking was modified from (\cite{rna}). The primary power of a one-flip search is its ability to quickly evaluate the effect of flipping a single bit,  $x_i \rightarrow 1- x_i$, allowing selection of the variable having the greatest effect on a local solution in $O(n)$ time (\cite{kochenberger2004unified}) as opposed to directly evaluating $x'Qx$ which is $O(n^2)$.  The search used in this paper accepts an input $Q$ matrix as well as a starting elite set of solutions of size $S$.  It performs path relinking between the solutions in $S$ to derive a starting solution where path relinking is implemented as a greedy search of the restricted solution space defined by the difference bits of a solution pair. The relinking generates a starting solution from which a greedy search is performed by repeatedly selecting the single non-tabu variable that has the largest positive impact on the current solution. Variables selected to be flipped are given a tabu tenure to avoid cycling.  When there are no non-tabu variables available to improve the current solution then a backtracking operation is performed to undo previous flips.  When no variable (tabu or not) is available to improve the current solution then a local optimum has been encountered and backtracking is performed.

The diversity attributes are measured as the mean hamming distance between all pairs of solution vectors. Note that the hamming distance $d(a,b)$ between two binary vectors $a$ and $b$ is given by the number of difference bits. The mean hamming distance $\mu_d$ is given by $\sum_{a,b \in L}^{a \neq b} d(a,b)/|L|$. Similarly, we can measure the quality of the elite set by the mean objective function, $\mu_{Obj}$. For benchmarking, we utilize a common approach  presented in the literature (\cite{wang2013backbone,wang2012path,glover2010diversification,samorani2019clustering}) to generate elite sets consisting of a randomized solution improved by a greedy heuristic until a local one-flip optima is reached and the solution added to the elite set and the process repeated. For both the CP solver and the greedy heuristic, we allot $600$ seconds and extract the top $500$ local optima sorted by the objective function to favor high-quality solutions.

% Table generated by Excel2LaTeX from sheet 'Present_Diversity'
\begin{table}[htbp]
	\centering
	\caption{Diversity Attributes of CP Approach}
	\scalebox{0.67}{
		\begin{tabular}{r|r|r|r|r}
			\hline
			\multicolumn{1}{l}{bqp2500} & \multicolumn{2}{c}{CP Solver} & \multicolumn{2}{c}{Greedy Search Heuristic} \\
			\multicolumn{1}{l}{Instance} & \multicolumn{1}{l}{$\mu_d$} & \multicolumn{1}{l}{$\mu_{Obj}$} & \multicolumn{1}{l}{$\mu_d$} & \multicolumn{1}{l}{$\mu_{Obj}$} \\
			\hline
			1     & 521.5 & 996297.9 & 266.0 & 1504173.5 \\
			2     & 500.0 & 1008336.9 & 286.5 & 1460415.7 \\
			3     & 535.3 & 941038.3 & 266.6 & 1403657.1 \\
			4     & 547.8 & 979253.4 & 221.0 & 1499321.1 \\
			5     & 510.7 & 1009030.4 & 240.4 & 1481973.9 \\
			6     & 473.4 & 991556.1 & 237.2 & 1460578.2 \\
			7     & 492.8 & 993305.4 & 292.7 & 1467269.2 \\
			8     & 569.5 & 969050.3 & 216.5 & 1476406.9 \\
			9     & 458.0 & 1014991.4 & 251.5 & 1472053.8 \\
			10    & 462.7 & 1002726.8 & 295.2 & 1470710.0 \\
			\hline
	\end{tabular}}%
	\label{tab:tab1}%
\end{table}%

The results are presented in Table 1. The CP approach leads to more diverse solutions since $\mu_d$ for the CP solver is almost double those of the greedy heuristic. While the greedy approach obtains solutions with higher objective values, they are less diverse, hence the CP approach provides a compromise between solution diversity and the objective value. %In addition, the CP approach is fast and obtains thrice as many local optima in the limited time frame than the greedy heuristic.

%We experiment with two types of solver: (a) CPLEX solver with $qtolin = 0$ (behaves like an exact quadratic solver) (b) Path Relinking solver based on CITE.

%have [50,100,250,500,1000]
%TODO - Present Quadratic vs Linear CP Model for IBM CPLEX and Google CP-SAT; Present hard constraint vs soft constraint

%\section{Results} \label{results}

%TODO - FORMAT OF THE TABLES 

To assess the impact of different soft constraint thresholds, we experiment with the following values of $\alpha$: (a) $0.99$ (b) $0.975$ (c) $0.95$ and the following settings of $\delta$: (a) $2\%$ (b) $5\%$ (c) $10\%$ wherein the percentage is expressed in terms of the maximum value of the coefficients of the $Q$ matrix. For example, the coefficients of the ORLIB datasets lie in the range $[-100,100]$. Thus, the linear coefficients of the $Q$ matrix are adjusted by $\delta = 2, 5$ or $10$ units.

For the nine different parameter combinations of $(\alpha,\delta)$, we allotted $100$ seconds each. For a fairer comparison, the benchmark experiments on the original $Q$ matrix are run for a total of $900$ seconds. We present two different versions of our heuristic based on $Q_1$ and $Q_2$ in Table 2. The columns ``$Obj_Q$", ``$Improv_{Q1}$" (and ``$Improv_{Q2}$") represent the best objective function obtained by the QUBO solver using the $Q$ matrix within $900$ seconds and the percentage improvement in the best objective function among nine parameter combinations of $(\alpha,\delta)$ using $Q_1$ (and $Q_2$) matrix respectively.% (each run on the transformed matrices is allotted $100$ sec).

% Table generated by Excel2LaTeX from sheet 'Final-Present_Deltav3'
\begin{table}[htbp]
	\centering
	\caption{Results of Algorithm 2}
	\scalebox{0.67}{
		\begin{tabular}{|lrrr|lrrr|}
			\hline
			Instance & \multicolumn{1}{l}{$Obj_Q$} & \multicolumn{1}{l}{$Improv_{Q1}$} & \multicolumn{1}{l}{$Improv_{Q2}$} & Instance & \multicolumn{1}{l}{$Obj_Q$} & \multicolumn{1}{l}{$Improv_{Q1}$} & \multicolumn{1}{l}{$Improv_{Q2}$} \\
			\hline
			1000\_5000\_1 & 25934 & 0.38  & 0.28  & 1000\_10000\_1 & 42920 & 0.91  & 1.35 \\
			1000\_5000\_2 & 483289 & 0.03  & 0.06  & 1000\_10000\_2 & 893493 & 0.59  & 0.63 \\
			1000\_5000\_3 & 52469 & 0.03  & 0.02  & 1000\_10000\_3 & 96764 & 0     & -0.01 \\
			1000\_5000\_4 & 214726 & -0.39 & 0.24  & 1000\_10000\_4 & 371621 & 1.66  & 0.9 \\
			1000\_5000\_5 & 18644 & 0.01  & -0.02 & 1000\_10000\_5 & 29870 & -0.03 & -0.01 \\
			1000\_5000\_6 & 275332 & 0.2   & 0.37  & 1000\_10000\_6 & 476253 & 0.31  & 0.26 \\
			1000\_5000\_7 & 32141 & 0.26  & 0.31  & 1000\_10000\_7 & 55732 & 0.17  & 0.19 \\
			1000\_5000\_8 & 155738 & 0.12  & -0.13 & 1000\_10000\_8 & 250964 & 0.17  & -0.02 \\
			1000\_5000\_9 & 270749 & 0.49  & 0.25  & 1000\_10000\_9 & 479986 & -0.17 & 0.14 \\
			1000\_5000\_10 & 18385 & 0.05  & -0.04 & 1000\_10000\_10 & 29624 & 0     & 0.02 \\
			1000\_5000\_11 & 158718 & 0.02  & 0.07  & 1000\_10000\_11 & 255999 & 0.83  & 0.96 \\
			1000\_5000\_12 & 32297 & 0.01  & 0.01  & 1000\_10000\_12 & 54825 & 0.01  & 0.01 \\
			1000\_5000\_13 & 477743 & 0.54  & 0.08  & 1000\_10000\_13 & 870231 & 0.04  & 0.26 \\
			1000\_5000\_14 & 25848 & 0.04  & 0.01  & 1000\_10000\_14 & 43236 & 0.12  & 0.01 \\
			1000\_5000\_15 & 214435 & -0.01 & 0.56  & 1000\_10000\_15 & 374992 & 0.62  & 0.85 \\
			1000\_5000\_16 & 52686 & 0     & -0.01 & 1000\_10000\_16 & 97105 & 0.02  & 0.19 \\
			\hline
			bqp\_1000\_1 & 371155 & 0.07  & 0.07  & bqp\_2500\_1 & 1512444 & 0.23  & 0.18 \\
			bqp\_1000\_2 & 354822 & 0.02  & 0.03  & bqp\_2500\_2 & 1469553 & 0.03  & 0.09 \\
			bqp\_1000\_3 & 371236 & 0     & 0     & bqp\_2500\_3 & 1413186 & 0.04  & 0.04 \\
			bqp\_1000\_4 & 370638 & -0.01 & 0     & bqp\_2500\_4 & 1506521 & 0.07  & 0.07 \\
			bqp\_1000\_5 & 352730 & 0     & 0     & bqp\_2500\_5 & 1491700 & 0.01  & -0.01 \\
			bqp\_1000\_6 & 359629 & 0     & 0     & bqp\_2500\_6 & 1468745 & -0.01 & -0.05 \\
			bqp\_1000\_7 & 370718 & 0.13  & 0.11  & bqp\_2500\_7 & 1478073 & 0.02  & -0.03 \\
			bqp\_1000\_8 & 351975 & 0     & 0.01  & bqp\_2500\_8 & 1483757 & 0.03  & 0.02 \\
			bqp\_1000\_9 & 349044 & 0.06  & 0.08  & bqp\_2500\_9 & 1482091 & 0.01  & 0.02 \\
			bqp\_1000\_10 & 351272 & 0.04  & -0.01 & bqp\_2500\_10 & 1482220 & -0.02 & 0 \\
			\hline
	\end{tabular}}%
	\label{tab:addlabel}%
\end{table}%

In summary, favoring or escaping local optima based on $Q_1$ and $Q_2$ leads to improvement in solution quality in the majority of the instances. Moreover, utilizing both $Q_1$ and $Q_2$ results (i.e. looking at $max(Improv_{Q1},Improv_{Q2})$) leads to a guaranteed improvement in all but two instances (1000\_10000\_5 and bqp\_2500\_6). Future research will explore this dynamic through a parallelized tabu search with alternating phases between $Q, Q_1$ and $Q_2$.

We conducted a paired two-sample t-test between ``$Improv_{Q1}$ and ``$Improv_{Q2}$" columns to determine whether the population mean of the $Q_1$ results was different that that of $Q_2$ and found there is no statistically significant difference between the two techniques. Moreover, no specific combination of $(\alpha,\delta)$ was dominant over all others.% always dominated improvements for all datasets of a given type.

%Next, we analyze the number of variables set to $0/1$ according to our CP approach. For this purpose, we present the results for bqp\_2500\_1.txt in Figure 1. Note that we visualize the relative value of $freq_1$ using $500$ solutions in the elite set. According to Algorithm 2, $freq_1 \geq \alpha$ for a given variable leads to nudges in the linear coefficient captured by transformations $Q_1$ and $Q_2$ respectively. Moreover, for a given variable $freq_1$ and $freq_0$ are complements connected by the equation $freq_1 + freq_0 = 1$. We can clearly observe from the figure that around $600$ and $400$ variables' linear coefficients are impacted based on $freq_1$ and $freq_0$ values with $\alpha=0.95$. Thus, almost 40\% (1000 out of 2500) variable coefficients are updated according to Algorithm 2. Future research would explore the ties between Algorithm 2 and the QUBO preprocessor detailed in \cite{glover2018logical}.

\begin{comment}
\begin{figure}[htbp]
	\centerline{\includegraphics[scale=0.9]{Vars_Freq_1.png}}
	\caption{Number of Variables vs Relative Frequency of setting $x_i = 1$}
	\label{fig:fig11}
\end{figure}
\end{comment}

\section{Conclusions} \label{conc}
We present a Constraint Programming approach to obtain a diverse set of local optima which could be utilized in the elite sets or local optima networks, and we present a learning-based technique that modifies the linear coefficients of the $Q$ matrix while favoring or avoiding local optima. Testing indicates this technique leads to improvement in solution quality for benchmark QUBO instances. Future work involves combining the effects of $Q_1$ and $Q_2$ in an alternating phase tabu search heuristic.

\bibliographystyle{elsarticle-num} 
%\bibliography{rFilp}
{\footnotesize%\scriptsize%\small
\bibliography{rFilp}}

\begin{thebibliography}{10}
\expandafter\ifx\csname url\endcsname\relax
  \def\url#1{\texttt{#1}}\fi
\expandafter\ifx\csname urlprefix\endcsname\relax\def\urlprefix{URL }\fi
\expandafter\ifx\csname href\endcsname\relax
  \def\href#1#2{#2} \def\path#1{#1}\fi

\bibitem{kochenberger2014unconstrained}
G.~Kochenberger, J.-K. Hao, F.~Glover, M.~Lewis, Z.~L{\"u}, H.~Wang, Y.~Wang,
  The unconstrained binary quadratic programming problem: a survey, Journal of
  Combinatorial Optimization 28~(1) (2014) 58--81.

\bibitem{hauke2020perspectives}
P.~Hauke, H.~G. Katzgraber, W.~Lechner, H.~Nishimori, W.~D. Oliver,
  Perspectives of quantum annealing: Methods and implementations, Reports on
  Progress in Physics 83~(5) (2020) 054401.

\bibitem{wang2013backbone}
Y.~Wang, Z.~L{\"u}, F.~Glover, J.-K. Hao, Backbone guided tabu search for
  solving the ubqp problem, Journal of Heuristics 19~(4) (2013) 679--695.

\bibitem{wang2012path}
Y.~Wang, Z.~L{\"u}, F.~Glover, J.-K. Hao, Path relinking for unconstrained
  binary quadratic programming, European Journal of Operational Research
  223~(3) (2012) 595--604.

\bibitem{glover2010diversification}
F.~Glover, Z.~L{\"u}, J.-K. Hao, Diversification-driven tabu search for
  unconstrained binary quadratic problems, 4OR 8~(3) (2010) 239--253.

\bibitem{samorani2019clustering}
M.~Samorani, Y.~Wang, Z.~Lv, F.~Glover, Clustering-driven evolutionary
  algorithms: an application of path relinking to the quadratic unconstrained
  binary optimization problem, Journal of Heuristics 25~(4-5) (2019) 629--642.

\bibitem{boros2007local}
E.~Boros, P.~L. Hammer, G.~Tavares, Local search heuristics for quadratic
  unconstrained binary optimization (qubo), Journal of Heuristics 13~(2) (2007)
  99--132.

\bibitem{alidaee2010theorems}
B.~Alidaee, G.~Kochenberger, H.~Wang, Theorems supporting r-flip search for
  pseudo-boolean optimization, International Journal of Applied Metaheuristic
  Computing (IJAMC) 1~(1) (2010) 93--109.

\bibitem{anacleto2020closed}
E.~A. Anacleto, C.~N. Meneses, S.~V. Ravelo, Closed-form formulas for
  evaluating r-flip moves to the unconstrained binary quadratic programming
  problem, Computers \& Operations Research 113 (2020) 104774.

\bibitem{chardaire1995thermostatistical}
P.~Chardaire, J.~L. Lutton, A.~Sutter, Thermostatistical persistency: A
  powerful improving concept for simulated annealing algorithms, European
  Journal of Operational Research 86~(3) (1995) 565--579.

\bibitem{wang2011effective}
Y.~Wang, Z.~L{\"u}, F.~Glover, J.-K. Hao, Effective variable fixing and scoring
  strategies for binary quadratic programming, in: European Conference on
  Evolutionary Computation in Combinatorial Optimization, Springer, 2011, pp.
  72--83.

\bibitem{zhou2017data}
Y.~Zhou, J.-K. Hao, B.~Duval, When data mining meets optimization: A case study
  on the quadratic assignment problem, arXiv preprint arXiv:1708.05214 (2017).

\bibitem{karimi2017boosting}
H.~Karimi, G.~Rosenberg, Boosting quantum annealer performance via sample
  persistence, Quantum Information Processing 16~(7) (2017) 166.

\bibitem{basharu2007escaping}
M.~Basharu, I.~Arana, H.~Ahriz, Escaping local optima: constraint weights vs.
  value penalties, in: International Conference on Innovative Techniques and
  Applications of Artificial Intelligence, Springer, 2007, pp. 51--64.

\bibitem{voudouris2003guided}
C.~Voudouris, E.~P. Tsang, Guided local search, in: Handbook of metaheuristics,
  Springer, 2003, pp. 185--218.

\bibitem{whittley2004attribute}
I.~M. Whittley, G.~D. Smith, The attribute based hill climber, Journal of
  Mathematical Modelling and Algorithms 3~(2) (2004) 167--178.

\bibitem{ochoa2014local}
G.~Ochoa, S.~Verel, F.~Daolio, M.~Tomassini, Local optima networks: A new model
  of combinatorial fitness landscapes, in: Recent advances in the theory and
  application of fitness landscapes, Springer, 2014, pp. 233--262.

\bibitem{verma2020penalty}
A.~Verma, M.~Lewis, Penalty and partitioning techniques to improve performance
  of qubo solvers, Discrete Optimization (2020) 100594.

\bibitem{verma2020optimal}
A.~Verma, M.~Lewis, Optimal quadratic reformulations of fourth degree
  pseudo-boolean functions, Optimization Letters 14~(6) (2020) 1557--1569.

\bibitem{glover2018logical}
F.~Glover, M.~Lewis, G.~Kochenberger, Logical and inequality implications for
  reducing the size and difficulty of quadratic unconstrained binary
  optimization problems, European Journal of Operational Research 265~(3)
  (2018) 829--842.

\bibitem{beasley1990or}
J.~E. Beasley, Or-library: distributing test problems by electronic mail,
  Journal of the operational research society 41~(11) (1990) 1069--1072.

\bibitem{rna}
M.~Lewis, A.~Verma, T.~Eckdahl, Qfold: A new modeling paradigm for the rna
  folding problem, submitted to Journal of Heuristics.

\bibitem{kochenberger2004unified}
G.~A. Kochenberger, F.~Glover, B.~Alidaee, C.~Rego, A unified modeling and
  solution framework for combinatorial optimization problems, OR Spectrum
  26~(2) (2004) 237--250.

\end{thebibliography}

%% else use the following coding to input the bibitems directly in the
%% TeX file.

%\bibliographystyle{spmpsci}      % mathematics and physical sciences
%\bibliographystyle{spphys}       % APS-like style for physics
%{\footnotesize \bibliography{qsubs} }  % name your BibTeX data base
%% \bibliographystyle{elsarticle-harv} 
%%  \bibliography{<your bibdatabase>}

%\begin{thebibliography}{00}

%% \bibitem{label}
%% Text of bibliographic item

%\bibitem{}

%\end{thebibliography}
\end{document}